\documentclass[11pt]{article}
\usepackage[margin=1in]{geometry}

\usepackage[utf8]{inputenc} 
\usepackage[T1]{fontenc}    
\usepackage{nicefrac}       
\usepackage{hyperref}       

\usepackage{microtype}
\usepackage{graphicx}
\usepackage{subfigure}
\usepackage{booktabs} 
\usepackage{bbm,bm}
\usepackage{epsfig,ifthen}
\usepackage{latexsym}
\usepackage{amsfonts}
\usepackage{eepic}
\usepackage{amsopn}
\usepackage{amsmath,amssymb,amsthm,rotating,graphicx,rotating,float}
\usepackage{accents}
\usepackage{authblk,blindtext,titlefoot} 
\usepackage[mathscr]{eucal}
\usepackage{url}
\usepackage{graphicx}
\usepackage{csquotes}
\usepackage{verbatim}
\usepackage{ulem}
\normalem
\usepackage{algorithm}
\usepackage{algorithmic}

\usepackage{color} 


\newlength{\dhatheight}

\newcommand{\argmin}{\mathop{\rm argmin}}



\newcommand{\ignore}[1]{}
\newcommand{\todo}[1]{}
\newcommand{\oldstuff}[1]{}

\newtheorem{theorem}{Theorem}

\newsavebox{\savepar}

\makeatletter
\newcommand{\vast}{\bBigg@{3}}
\newcommand{\Vast}{\bBigg@{4}}
\makeatother

\newcommand{\eat}[1]{}

\newtheorem{lemma}[theorem]{Lemma}


\begin{document}

\date{\today}
\title{Improving Simple Models with Confidence Profiles}
\author[a,* ]{Amit Dhurandhar}
\author[b,*]{Karthikeyan Shanmugam}
\author[c]{Ronny Luss}
\author[d]{Peder Olsen}
\affil[ ]{ IBM Research, Yorktown Heights, NY.}

\maketitle
 \unmarkedfntext{\small \\   \textsuperscript{a} \texttt{adhuran@us.ibm.com}  \\ \textsuperscript{b}
\texttt{karthikeyan.shanmugam2@us.ibm.com} \\  \textsuperscript{c}
\texttt{rluss@us.ibm.com} \\ \textsuperscript{d} \texttt{pederao@us.ibm.com \\ * Equal Contribution }}

\begin{abstract}
In this paper, we propose a new method called ProfWeight for transferring information from a pre-trained deep neural network that has a high test accuracy to a simpler interpretable model or a very shallow network of low complexity and a priori low test accuracy. We are motivated by applications in interpretability and model deployment in severely memory constrained environments (like sensors). Our method uses linear probes to generate confidence scores through flattened intermediate representations. Our transfer method involves a theoretically justified weighting of samples during the training of the simple model using  confidence scores of these intermediate layers. The value of our method is first demonstrated on CIFAR-10, where our weighting method significantly  improves (3-4\%) networks with only a fraction of the number of Resnet blocks of a complex Resnet model. We further demonstrate operationally significant results on a real manufacturing problem, where we dramatically increase the test accuracy of a CART model (the domain standard) by roughly $13\%$. 
\end{abstract}

\section{Introduction}

Complex models such as deep neural networks have shown remarkable success in applications such as computer vision, speech and time series analysis \cite{gan,resnet,fitnet,facialex}. One of the primary concerns with these models has been their lack of transparency which has curtailed their widespread use in domains where human experts are responsible for critical decisions \cite{liptondflaws}. Recognizing this limitation, there has been a surge of methods recently \cite{saliency,unifiedPI,bach2015pixel,selvaraju2016grad,lime} to make deep networks more interpretable. These methods highlight important features that contribute to the particular classification of an input by a deep network and have been shown to reasonably match human intuition.

We, in this paper, however, propose an intuitive model agnostic method to enhance the performance of simple models (viz. lasso, decision trees, etc.) using a pretrained deep network. A natural question to ask is, given the plethora of explanation techniques available for deep networks, why do we care about enhancing simple models? Here are a few reasons why simple models are still important.

\noindent\textbf{Domain Experts Preference:} In applications where the domain experts are responsible for critical decisions, they usually have a favorite model (viz. lasso in medical decision trees in advanced manufacturing) that they trust \cite{sholom}. Their preference is to use something they have been using for years and are comfortable with.

\noindent\textbf{Small Data Settings:} Companies usually have limited amounts of usable data collected for their business problems. As such, simple models are many times preferred here as they are less likely to overfit the data and in addition can provide useful insight \cite{smdata}. In such settings, improving the simple models using a complex model trained on a much larger publicly/privately available corpora with the same feature representation as the small dataset would be highly desirable.

\noindent\textbf{Resource-Limited Settings:}
In addition to interpretability, simple models are also useful in settings where there are power and memory constraints. For example, in certain Internet-of-Things (IoT) \cite{pc1} such as those on mobile devices and in unmanned aerial vehicles (UAV) \cite{pc2} there are strict power and memory constraints which preclude the deployment of large deep networks. In such settings, neural networks with only a few layers and possibly up to a few tens of thousands of neurons are considered reasonable to deploy.

\begin{figure}[t]
  \centering  
      \includegraphics[width=0.95\textwidth]{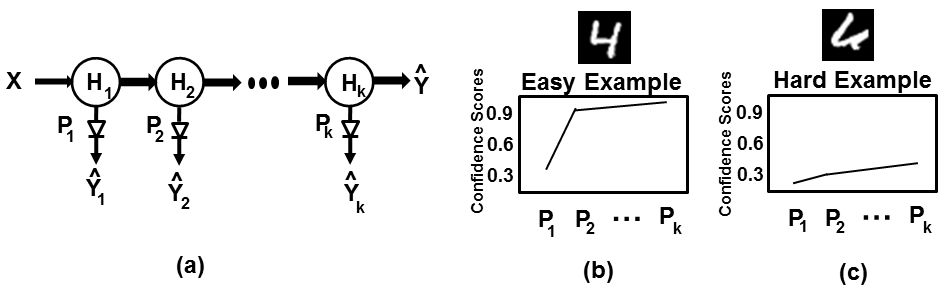}  
  \caption{Above we depict our general idea. In (a), we see the $k$ hidden representations $H_1\cdots H_k$ of a pretrained neural network. The diode symbols (triangle with vertical line) attached to each $H_i$ $\forall i\in\{1,...,k\}$ denote the probes $P_i$ as in \cite{probes}, with the $\hat{Y}_i$ denoting the respective outputs. In (b)-(c), we see example plots created by plotting the confidence score of the true label at each probe. In (b), we see a well written digit "4" which possibly is an easy example to classify and hence the confidence scores are high even at lower level probes. This sharply contrasts the curve in (c), which is for a much harder example of digit "4".}
  \label{probesCF}
\end{figure}

We propose a method where we add probes to the intermediate layers of a deep neural network.
A probe is essentially a logistic classifier (linear model with bias followed by a softmax) added to an intermediate layer of a pretrained neural network so as obtain its predictions from that layer. We call them \textit{linear probes} throughout this paper. This is depicted in figure \ref{probesCF}(a), where $k$ probes are added to $k$ hidden layers. Note that there is no backpropagation of gradients through the probes to the hidden layers. In other words, the hidden representations are fixed once the neural network is trained with only the probes being updated to fit the labels based on these previously learned representations. Also note that we are not required to add probes to each layer. We may do so only at certain layers which represent logical units for a given neural network architecture. For example, in a Resnet \cite{resnet} we may add probes only after each Residual unit/block.

The confidence scores of the true label of an input when plotted at each of the probe outputs form a curve that we call a \emph{confidence profile} for that input. This is seen in figure \ref{probesCF} (b)-(c). We now want to somehow use these confidence profiles to improve our simple model. It's worth noting that probes have been used before, but for a different purpose. For instance in \cite{probes}, the authors use them to study properties of the neural network in terms of its stability and dynamics, but not for information transfer as we do. We consider functions of these confidence scores starting from an intermediate layer up to the final layer to weight samples during training of the simple model. The first function we consider, area under the curve (AUC) traced by the confidence scores, shows much improvement in the simple models performance. We then turn to learning the weights using neural networks that take as input the confidence scores and output an optimal weighting. Choice of the intermediate layers is chosen based on the simple model's complexity as is described later. 

We observe in experiments that our proposed method can improve performance of simple models that are desirable in the respective domains. On CIFAR \cite{cifar}, we improve a simple neural network with very few Resnet blocks which can be deployed on UAVs and in IoT applications where there are memory and power constraints \cite{pc1}. On a real manufacturing dataset, we significantly improve a decision tree classifier which is the method of choice of a fab engineer working in an advanced manufacturing plant. 

The primary intuition behind our approach is to identify examples that the simple model will most likely fail on, i.e. identify \emph{truly hard} examples. We then want to inform the simple model to ignore these examples and make it expend more effort on other \emph{relatively easier} examples that it could perform well on, with the eventual hope of improving generalization performance. This is analogous to a teacher (i.e. complex model) informing a student (i.e. simple model) about aspects of the syllabus he should focus on and which he/she could very well ignore since it is way above his/her current level. We further ground this intuition and justify our approach by showing that it is a specific instance of a more general procedure that weights examples so as to learn the optimal simple model.

\eat{
\begin{algorithm}[t]
    \caption{ProfWeight}
    \label{algo:weight}
\begin{algorithmic}
\STATE \textbf{Input:} $k$ unit neural network $\mathcal{N}$, learning algorithm for a simple model $\mathcal{L_S}$, dataset $D_N$ which was used to train $\mathcal{N}$, dataset $D_S=\{(x_1,y_1),...,(x_m,y_m)\}$ to train a simple model and margin parameter $\alpha$.
\STATE
\STATE 1) Attach probes $P_1, ..., P_k$ to the $k$ units of $\mathcal{N}$.
\STATE 2) Train the probes based on $D_N$ and obtain their errors $e_1, ...,e_k$ on $D_S$. \COMMENT{There is no backpropagation of gradients here to the hidden units/layers of $\mathcal{N}$.}
\STATE 3) Train the simple model $\mathcal{S}\gets \mathcal{L_S}(D_S,\vec{1}_m)$ and obtain its error $e_S$.\COMMENT{$\mathcal{S}$ is obtained by unweighted training. $\vec{1}_m$ denotes a $m$ dimensional vector of $1$s.}
\STATE 4) Let $I\gets\{u~|~e_u\le e_S-\alpha\}$\COMMENT{$I$ contains indices of all probes that are more accurate than the simple model $\mathcal{S}$ by a margin $\alpha$ on $D_S$.}
\STATE 5) Set $i\gets 1$, $W\gets\emptyset$
\WHILE{$i\le m$} \STATE\COMMENT{Pass each example in $D_S$ through the probes.}
\STATE 6) Obtain confidence scores $\{c_{iu}=P_u(R_u(x_i))[y_i]~|~u\in I\}$ for $x_i$ when predicting the class $y_i$ using the probes $P_u$.
\STATE 7) Compute $w_i \gets \frac{1}{|I|}\sum_{u\in I}c_{iu}$ \COMMENT{In other words, estimate AUC for the example $(x_i,y_i)\in D_S$ based on probes indexed by $I$. $|\cdot|$ denotes cardinality.}
\STATE 8) Let $W\leftarrow W\cup w_i$
\STATE 9) Increment $i$, i.e., $i\gets i+1$
\ENDWHILE
\STATE 10) Obtain simple model $\mathcal{S}_W\gets\mathcal{L_S}(D_S,W)$ \COMMENT{Train the simple model on $D_S$ along with the weights $W$ associated with each input.}
\RETURN $\mathcal{S}_W$
\end{algorithmic}
\end{algorithm}
}

\begin{algorithm}[t]
    \caption{ProfWeight }
    \label{algo:weight}
\begin{algorithmic}
\STATE \textbf{Input:} $k$ unit neural network $\mathcal{N}$, learning algorithm for simple model $\mathcal{L_S}$, dataset $D_N$ used to train $\mathcal{N}$, dataset $D_S=\{(x_1,y_1),...,(x_m,y_m)\}$ to train a simple model and margin parameter $\alpha$.
\STATE
\STATE 1) Attach probes $P_1, ..., P_k$ to the $k$ units of $\mathcal{N}$.
\STATE 2) Train probes based on $D_N$ and obtain errors $e_1, ...,e_k$ on $D_S$. \COMMENT{There is no backpropagation of gradients here to the hidden units/layers of $\mathcal{N}$.}
\STATE 3) Train the simple model $\mathcal{S}\gets \mathcal{L_S}(D_S,\beta,\vec{1}_m)$ and obtain its error $e_S$.\COMMENT{$\mathcal{S}$ is obtained by unweighted training. $\vec{1}_m$ denotes a $m$ dimensional vector of $1$s.}
\STATE 4) Let $I\gets\{u~|~e_u\le e_S-\alpha\}$\COMMENT{$I$ contains indices of all probes that are more accurate than the simple model $\mathcal{S}$ by a margin $\alpha$ on $D_S$.}
\STATE 5) Compute weights $w$ using Algorithm \ref{algo:profweighting} or \ref{algo:optweighting} for AUC or neural network, respectively.
\STATE 6) Obtain simple model $\mathcal{S}_{\textbf{w},\beta}\gets\mathcal{L_{S,\beta}}(D_S,\beta,w)$ \COMMENT{Train the simple model on $D_S$ along with the weights $w$ associated with each input.}
\RETURN $\mathcal{S}_{w,\beta}$
\end{algorithmic}
\end{algorithm}

\begin{algorithm}[t]
\caption{AUC Weight Computation}
    \label{algo:profweighting}
\begin{algorithmic} 
\STATE \textbf{Input:} Neural network $\mathcal{N}$, probes $P_u$, dataset $D_S$, and index set $I$ from Algorithm \ref{algo:weight}.
\STATE
\STATE 1) Set $i\gets 1$, $w=\vec{0}_m$  ($m$-vector of zeros)
\WHILE{$i\le m$} 
\STATE 2) Obtain confidence scores $\{c_{iu}=P_u(R_u(x_i))[y_i]~|~u\in I\}$.
\STATE 3) Compute $w_i \gets \frac{1}{|I|}\sum_{u\in I}c_{iu}$ \COMMENT{In other words, estimate AUC for sample $(x_i,y_i)\in D_S$ based on probes indexed by $I$. $|\cdot|$ denotes cardinality.}
\STATE 4) Increment $i$, i.e., $i\gets i+1$
\ENDWHILE
\RETURN $w$
\end{algorithmic}
\end{algorithm}

\begin{algorithm}[t]
    \caption{Neural Network Weight Computation}
    \label{algo:optweighting}
\begin{algorithmic} 
\STATE \textbf{Input:} Weight space $\mathcal{C}$, dataset $D_S$, \# of iterations $N$ and index set $I$ from Algorithm \ref{algo:weight}.
\STATE
\STATE 1) Obtain confidence scores $\{c_{iu}=P_u(R_u(x_i))[y_i]~|~u\in I\}$ for $x_i$ when predicting the class $y_i$ using the probes $P_u$ for $i\in\{1,\ldots,m\}$.
\STATE 2) Initialize $i=1$, $w^0=\vec{1}_m$ and $\beta^0$ (simple model parameters)
\WHILE {$i\le N$}
\STATE 3) Update simple model parameters: $\beta^i=\argmin_\beta{\sum_{j=1}^m{\lambda(S_{w^{i-1},\beta}(x_j),y_j)}}$
\STATE 4) \label{weightopt} Update weights: $w^i=\argmin_{w\in\mathcal{C}}{\sum_{j=1}^m{\lambda(S_{w,\beta^i}(x_j),y_j)}}+\gamma\mathcal{R}(w)$, where $\mathcal{R}(\cdot)$ is a regularization term set to $(\frac{1}{m}\sum_{i=1}^m{w_i}-1)^2$ with scaling parameter $\gamma$. \COMMENT{Note that the weight space $\mathcal{C}$ restricts $w$ to be a neural network that takes as input the confidence scores $c_{iu}$}
\STATE 5) Increment $i$, i.e., $i\gets i+1$
\ENDWHILE
\RETURN $w=w^N$
\end{algorithmic}
\end{algorithm}

\section{General Framework}
\label{em}

In this section we provide a simple method to transfer information from a complex model to a simple one by characterizing the hardness of examples. We envision doing this with the help of confidence profiles that are obtained by adding probes to different layers of a neural network.

As seen in figure \ref{probesCF}(b)-(c), our intuition is that easy examples should be resolved by the network, that is, classified correctly with high confidence at lower layer probes themselves, while hard examples would either not be resolved at all or be resolved only at or close to the final layer probes. This captures the notion of the network having to do more work and learn more finely grained representations for getting the harder examples correctly classified. One way to capture this notion is to compute the area under the curve (AUC) traced by the confidence scores at each of the probes for a given input-output pair. AUC amounts to averaging the values involved. Thus, as seen in figure \ref{probesCF}(b), the  higher the AUC, the easier is the example to classify. Note that the confidence scores are for the \emph{true} label of that example and not for the predicted label, which may be different.

We next formalize this intuition which suggests a truly hard example is one that is more of an outlier than a prototypical example of the class that it belongs to. In other words, if $X\times Y$ denotes the input-output space and $p(x,y)$ denotes the joint distribution of the data, then a hard example $(x_h,y_h)$ has low $p(y_h|x_h)$. 

A learning algorithm $\mathcal{L_S}$ is trying to learn a simple model that ``best" matches $p(y|x)$ so as to have low generalization error. The dataset $D_S=\{(x_1,y_1),...,(x_m,y_m)\}$, which may or may not be representative of $p(x,y)$, but which is used to produce the simple model, may not produce this best match. We thus have to bias $D_S$ and/or the loss of $\mathcal{L_S}$ so that we produce this best match. The most natural way to bias is by associating \emph{weights} $W=\{w_1,...,w_m\}$ with each of the $m$ examples $(x_1,y_1),...,(x_m,y_m)$ in $D_S$. This setting thus seems to have some resemblance to covariate shift \cite{Agarwal11} where one is trying to match distributions. Our goal here, however, is not to match distributions but to bias the dataset in such a way that we produce the best performing simple model.

If $\lambda(.,.)$ denotes a loss function, $w$ a vector of $m$ weights to be estimated for examples in $D_S$, and $S_{w,\beta}=\mathcal{L_S}(D_S,\beta,w)$ is a simple model with parameters $\beta$ that is trained by $\mathcal{L_S}$ on the weighted dataset. $\mathcal{C}$ is the space of allowed weights based on constraints (viz. penalty on extremely low weights) that would eliminate trivial solutions such as all weights being close to zero, and $\mathcal{B}$ is the simple model's parameter space. Then ideally, we would want to solve the following optimization problem:

\begin{equation}
\label{optS}
S^*=S_{w^*,\beta^*}=\min_{w \in \mathcal{C}} \min_{\beta\in\mathcal{B}} E\left[\lambda\left(S_{w,\beta}(x),y\right)\right]
\end{equation}

\noindent That is, we want to learn the optimal simple model $S^*$ by estimating the corresponding optimal weights $W^*$ which are used to weight examples in $D_S$. It is known that not all deep networks are good density estimators \cite{confbad}. Hence, our method does not just rely on the output confidence score for the true label, as we describe next.

\eat{Of course, equation (\ref{optS}) cannot be solved directly. So our proposed method uses a much better performing model to inform this weighting. It is known that not all deep networks are good density estimators \cite{confbad}. Hence, our method does not just rely on the output confidence score for the true label which is a special case of our weighting when the margin $\alpha$ is taken to be very large. We use the AUC of the confidence scores of the probes at higher accuracy than the simple model, which intuitively seems to be a much more robust indicator of the hardness of examples for it. As we will see in the following section, this adaptive weighting of examples seems to significantly improve the respective simple models.}

\eat{
As argued above, since quantifying the true hardness of examples is equivalent to accurately estimating the density $p(y|x)$, the following generalization bound (proof in supplement) for any simple model that minimizes empirical risk confirms that weighting examples with this accurate estimate of hardness is an effective approach.

\begin{theorem}
Given a simple model parametrized by $\theta$ which minimizes the loss function $\rho_\theta(x,y)$ over all $(x,y)\in D_S$ distributed according to $p(x,y)$ with $y\in \{-1,1\}$ and where $w_{(x,y)}=p(y|x)$ denotes the hardness of examples, we have for any real $c>0$

$\mathbb{E}^2_{p(x,y)}[ \mathbf{1}_{\rho_\theta(x,y)>\rho_\theta(x,-y)} ]\leq$\\ $ \mathbb{E}_{p(x)} \left[ c\left(w_{(x,1)} \rho_\theta(x,1) + w_{(x,-1)} \rho_\theta(x,-1)\right) \right]$  \\
  $ +\mathbb{E}_{p(x)}\left[\log (1 +e^{-c \lvert \rho_\theta(x,1) - \rho_\theta(x,-1) \rvert}) + 2e^{-2c\lvert \rho_\theta(x,1) - \rho_\theta(x,-1) \rvert}\right] $

\end{theorem}
\begin{proof}[Proof Sketch]
The result is derived based on total variation distance and Pinsker's inequality which links Kullback Leibler divergence with this distance.
\end{proof}

Note the above bound can be easily extended to multiclass settings. The most straightforward method is by union bound where we compare the loss of the correct class with all other classes. Also note that the parameter $c>0$ can be (empirically) tuned to obtain the best generalization results.

Hence, the expectation is that this weighting will produce a model $S^{\prime}$ that is closer in performance to $S^*$ than the standard model $S=\mathcal{L_S}(D_S,\vec{1}_m)$ which is obtained by training on the unweighted dataset.}



\subsection{Algorithm Description}
\label{Formaldesc}
We first train the complex model $\cal N$ on a data set $D_N$ and then freeze the resulting weights. Let ${\cal U}$ be the set of logical units\eat{(or intermediate layers)} whose representations we will use to train probes, and let $R_u(x)$ denote the flattened representation after the logical unit $u$ on input $x$ to the trained network ${\cal N}$.\eat{We train a linear probe that predicts label $y$ in terms of cross entropy loss from the flattened output $R_u(x)$.} We train \textit{probe function} $P_u(\cdot)=\sigma(Wx+b)$, where $W \in \mathbf{k \times |R_u(x)|}$, $b \in \mathbb{R}^{k}$, $\sigma(\cdot)$ is the standard softmax function, and $k$ is the number of classes, on the flattened representations $R_u(x)$ to optimize the cross-entropy with the labels $y$ in the training data set $D_N$. For a label $y$ among the class labels, $P_u(R_u(x))[y] \in [0,1]$ denotes the confidence score of the probe on label $y$.

Given that the simple model may have a certain performance, we do not want to use very low level probe confidence scores to convey hardness of examples to it. A teacher must be at least as good as the student and so we compute weights in Algorithm \ref{algo:weight} only based on those probes that are roughly more accurate than the simple model. We also have parameter $\alpha$ which can be thought of as a margin parameter determining how much better the weakest teacher should be. The higher the $\alpha$, the better the worst performing teacher will be. As we will see in the experiments, it is not always optimal to only use the best performing model as the teacher, since, if the teacher is highly accurate all confidences will be at or near 1 which will provide no useful information to the simple student model.

Our main algorithm, $\mathrm{ProfWeight}$ \footnote{Code is available at \url{https://github.ibm.com/Karthikeyan-Shanmugam2/Transfer/blob/master/README.md}} is detailed in Algorithm \ref{algo:weight}. At a high level it can be described as performing the following steps:
\begin{itemize}
\item Attach and train probes on intermediate representations of a high performing neural network.
\item Train a simple model on the original dataset.
\item Learn weights for examples in the dataset as a function (AUC or neural network) of the simple model and the probes.
\item Retrain the simple model on the final weighted dataset.
\end{itemize}

In step (5), one can compute weights either as the AUC (Algorithm \ref{algo:profweighting}) of the confidence scores of the selected probes or by learning a regularized neural network (Algorithm \ref{algo:optweighting}) that inputs the same confidence scores. In Algorithm \ref{algo:optweighting}, we set the regularization term $\mathcal{R}(w)=(\frac{1}{m}\sum_{i=1}^m{w_i}-1)^2$ to keep the weights from all going to zero. Also as is standard practice when training neural networks \cite{gan}, we also impose an $\ell_2$ penalty on the weights so as to prevent them from diverging. Note that, while the neural network is trained using batches of data, the regularization is still a function of all training samples. Algorithm \ref{algo:optweighting} alternates between minimizing two blocks of variables ($w$ and $\beta$). When the subproblems have solutions and are differentiable, all limit points of ($w_k$, $\beta_k$) can be shown to be stationary points \cite{Grippo}. The final step of $\mathrm{ProfWeight}$ is to train the simple model on $D_S$ with the corresponding learned weights. 

\eat{In algorithm \ref{algo:weight}, computed weights based on the average of the confidences of selected probes depend on the simple model performance. $e_S$ is the error of training the simple model without any weights on the data set $D_S$. Calculate the error of the probes $P_u(\cdot)$ in predicting the true label on the training set $D_S$. Let $e_u$ be the error. We collect all units $I = \{u : e_u \leq e_S- \alpha\}$. For every point $(x_i,y_i) \in D_S$ we compute the weight $w_i=\frac{1}{\lvert I \rvert}\sum_{u \in I}P_u(R_u(x_i))[y_i]$.  For the dataset that we wish to train the simple model on (viz. $D_S$), which may be different but belonging to the same feature space, we weight each input in this dataset $D_S$ with the weights $w_i$. We then train the simple model based on this weighted dataset. $\alpha$ can be set to zero or it can be tuned and an optimal value for it found empirically.}


\eat{
\section{Why Weight?}
\label{ww}

Intuitively, a truly hard example is one that is more of an outlier than a prototypical example of the class that it belongs to. In other words, if $X\times Y$ denotes the input-output space and $p(x,y)$ denotes the joint distribution of the data, then a hard example $(x_h,y_h)$ has low $p(y_h|x_h)$. 

A learning algorithm $\mathcal{L_S}$ is trying to learn a simple model that ``best" matches $p(y|x)$ so as to have low generalization error. The dataset $D_S=\{(x_1,y_1),...,(x_m,y_m)\}$, which may or may not be representative of $p(x,y)$, but which is used to produce the simple model may not produce this best match. We thus have to bias $D_S$ and/or the loss of $\mathcal{L_S}$ so that we produce this best match. The most natural way to bias is by associating \emph{weights} $W=\{w_1,...,w_m\}$ with each of the $m$ examples $(x_1,y_1),...,(x_m,y_m)$ in $D_S$. This setting thus seems to have some resemblance to covariate shift \cite{Agarwal11} where one is trying to match distributions. Our goal here, however, is not to match distributions but to bias the dataset in such a way that we produce the best performing simple model.

If $\lambda(.,.)$ denotes a loss function, $W$ a set of $m$ weights to be estimated for examples in $D_S$ and $S_W=\mathcal{L_S}(D_S,W)$ is a trained simple model on the weighted dataset. $\mathcal{C}$ is the space of allowed weights based on constraints (viz. penalty on extremely low weights) that would eliminate trivial solutions such as all weights being close to zero. Then ideally, we would want to solve the following optimization problem:

\begin{equation}
\label{optS}
S^*=S_{W^*}=\min_{W \in \mathcal{C}} E\left[\lambda\left(S_W(x),y\right)\right]
\end{equation}

\noindent That is, we want to learn the optimal simple model $S^*$ by estimating the corresponding optimal weights $W^*$ which are used to weight examples in $D_S$.

Of course, equation (\ref{optS}) cannot be solved directly. So our proposed methods use a much better performing model to inform this weighting. It is known that not all deep networks are good density estimators \cite{confbad}. Hence, our methods do not just rely on the output confidence score for the true label, which is a special case of our weighting when the margin $\alpha$ is taken to be very large. One method uses the AUC of the confidence scores of the probes at higher accuracy than the simple model, which intuitively seems to be a much more robust indicator of the hardness of examples for it. Our second method learns the weights by constraining them to be neural networks that take as input the probe confidence scores. As we will see in the following section, these adaptive weighting of examples seem to signifcantly improve the respective simple models.

\eat{
As argued above, since quantifying the true hardness of examples is equivalent to accurately estimating the density $p(y|x)$, the following generalization bound (proof in supplement) for any simple model that minimizes empirical risk confirms that weighting examples with this accurate estimate of hardness is an effective approach.

\begin{theorem}
Given a simple model parametrized by $\theta$ which minimizes the loss function $\rho_\theta(x,y)$ over all $(x,y)\in D_S$ distributed according to $p(x,y)$ with $y\in \{-1,1\}$ and where $w_{(x,y)}=p(y|x)$ denotes the hardness of examples, we have for any real $c>0$

$\mathbb{E}^2_{p(x,y)}[ \mathbf{1}_{\rho_\theta(x,y)>\rho_\theta(x,-y)} ]\leq$\\ $ \mathbb{E}_{p(x)} \left[ c\left(w_{(x,1)} \rho_\theta(x,1) + w_{(x,-1)} \rho_\theta(x,-1)\right) \right]$  \\
  $ +\mathbb{E}_{p(x)}\left[\log (1 +e^{-c \lvert \rho_\theta(x,1) - \rho_\theta(x,-1) \rvert}) + 2e^{-2c\lvert \rho_\theta(x,1) - \rho_\theta(x,-1) \rvert}\right] $

\end{theorem}
\begin{proof}[Proof Sketch]
The result is derived based on total variation distance and Pinsker's inequality which links Kullback Leibler divergence with this distance.
\end{proof}

Note the above bound can be easily extended to multiclass settings. The most straightforward method is by union bound where we compare the loss of the correct class with all other classes. Also note that the parameter $c>0$ can be (empirically) tuned to obtain the best generalization results.

Hence, the expectation is that this weighting will produce a model $S^{\prime}$ that is closer in performance to $S^*$ than the standard model $S=\mathcal{L_S}(D_S,\vec{1}_m)$ which is obtained by training on the unweighted dataset.}



}
\subsection{Theoretical Justification}
We next provide a justification for the regularized optimization in Step 4 of Algorithm \ref{algo:optweighting}. \eat{It is difficult to derive theoretical results on confidence scores of complex models (say neural networks).} Intuitively, we have a pre-trained complex model that has high accuracy on a test data set $D_{\mathrm{test}}$. Consider the binary classification setting. We assume that $D_{\mathrm{test}}$ has samples drawn from a uniform mixture of two class distributions: $P(\mathbf{x}|y=0)$ and $P(\mathbf{x}|y=1)$.
We have another simple model which is trained on a training set $D_{\mathrm{train}}$ and has a priori low accuracy on the $D_{\mathrm{test}}$. We would like to modify the training procedure of the simple model such that the test accuracy could be improved.

Suppose, training the simple model on training dataset $D_{\mathrm{train}}$ results in classifier $M$. We view this training procedure of simple models through a different lens: It is equivalent to the optimal classification algorithm trained on the following class distribution mixtures: $P_M(\mathbf{x}|y=1)$ and $P_M(\mathbf{x}|y=0)$. We refer to this distribution as $\tilde{D}_{\mathrm{train}}$. If we knew $P_M$, the ideal way to bias an entry $(\mathbf{x},y) \in \tilde{D}_{\mathrm{train}}$ in order to boost test accuracy would be to use the following importance sampling weights $w(\mathbf{x},y)= \frac{P(\mathbf{x}|y)}{P_M(\mathbf{x}|y)}$ to account for the covariate shift between $\tilde{D}_{\mathrm{train}}$ and $D_{\mathrm{test}}$. Motivated by this, we look at the following parameterized set of weights, $w_{M'}(\mathbf{x},y)= \frac{P(\mathbf{x}|y)}{P_{M'}(\mathbf{x}|y)}$ for every $M'$ in the simple model class. We now have following result (proof can be found in the supplement):
\begin{theorem}\label{thm:weights}
If $w_{M'}$ corresponds to weights on the training samples, then the constraint $\mathbb{E}_{P_M(\mathbf{x}|y)}[w_{M'}(\mathbf{x},y)] = 1$ implies that $w_{M'}(\mathbf{x},y)=\frac{P(\mathbf{x}|y)}{P_M(\mathbf{x}|y)}$.
\end{theorem}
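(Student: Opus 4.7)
The plan is to combine the parametric definition $w_{M'}(\mathbf{x},y) = P(\mathbf{x}|y)/P_{M'}(\mathbf{x}|y)$ with the normalization constraint $\mathbb{E}_{P_M(\mathbf{x}|y)}[w_{M'}(\mathbf{x},y)] = 1$ and exploit the fact that the reweighted training measure must itself be a valid probability density to pin down the unknown $P_{M'}$.

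First, I would plug the parametric form into the constraint to obtain, for each class label $y$,
\[
\int \frac{P(\mathbf{x}|y)}{P_{M'}(\mathbf{x}|y)} P_M(\mathbf{x}|y)\, d\mathbf{x} = 1.
\]
This says precisely that the non-negative function $w_{M'}(\mathbf{x},y)\, P_M(\mathbf{x}|y)$ integrates to one in $\mathbf{x}$ for each $y$, so it is a well-defined probability density over $\mathcal{X}$ conditional on $y$.

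Second, I would invoke the interpretation that ``$w_{M'}$ corresponds to weights on the training samples'': applying these weights to an empirical draw from $P_M(\cdot|y)$ is supposed to produce an effective conditional density equal to the test class-conditional $P(\cdot|y)$, since the whole purpose of the reweighting is to correct the covariate shift between $\tilde{D}_{\mathrm{train}}$ (governed by $P_M$) and $D_{\mathrm{test}}$ (governed by $P$). Formally, this reads
\[
w_{M'}(\mathbf{x},y)\, P_M(\mathbf{x}|y) = P(\mathbf{x}|y),
\]
which rearranges to $w_{M'}(\mathbf{x},y) = P(\mathbf{x}|y)/P_M(\mathbf{x}|y)$. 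The normalization constraint derived in step one is just the integrated form of this identity (both sides integrate to $1$), confirming internal consistency. Combining with the parametric form $w_{M'} = P/P_{M'}$ then forces $P_{M'}(\mathbf{x}|y) = P_M(\mathbf{x}|y)$ on the support of $P$, which yields the claimed expression for $w_{M'}$.

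The main obstacle is the uniqueness step. The scalar normalization $\mathbb{E}_{P_M}[w_{M'}] = 1$ is a single equation on the infinite-dimensional object $P_{M'}$, so on its own it cannot pin down $P_{M'}$; indeed one can cook up many $P_{M'} \neq P_M$ for which $\int P \cdot (P_M/P_{M'})\, d\mathbf{x} = 1$. The argument therefore has to lean on the implicit structural meaning of ``weights on training samples''—namely, that these are the importance weights used to transport the training measure $P_M$ onto the test measure $P$, not merely any normalizer. Making this identification rigorous (rather than treating the constraint as a bare moment condition) is the delicate part of the proof.
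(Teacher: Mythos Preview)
Your argument has a genuine gap at step two. You write that ``applying these weights to an empirical draw from $P_M(\cdot|y)$ is supposed to produce an effective conditional density equal to the test class-conditional $P(\cdot|y)$'' and then immediately use $w_{M'}(\mathbf{x},y)\,P_M(\mathbf{x}|y)=P(\mathbf{x}|y)$. But that identity is exactly the conclusion you are trying to prove (it is equivalent to $P_{M'}=P_M$), so invoking it as an ``interpretation'' is circular. You yourself flag this at the end: the bare moment condition $\int P\,P_M/P_{M'}=1$ is one scalar equation on an infinite-dimensional unknown, and you never supply the missing ingredient that would force $P_{M'}=P_M$. As written, the proposal identifies the difficulty but does not resolve it.

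The paper takes a completely different route. It does \emph{not} appeal to any structural meaning of ``weights on training samples''; instead it treats the constraint as a pure algebraic identity on densities and reduces the theorem to the following discrete lemma: if $p,q,r$ are probability vectors in $\mathbb{R}^n$ and $\sum_x p(x)r(x)/q(x)=1$, then $q=p$ or $q=r$ pointwise. Here $p\leftrightarrow P(\cdot|y)$, $r\leftrightarrow P_M(\cdot|y)$, $q\leftrightarrow P_{M'}(\cdot|y)$. The paper argues this lemma by explicit algebra: the $n=2$ case factors as $(r_1q_2-r_2q_1)(p_1q_2-p_2q_1)=0$; the symmetric case $p=r$ reduces to a sum of squares $\sum_{i\neq j}\prod_{k\neq i,j}q_k\,(p_iq_j-p_jq_i)^2=0$; and the general case is asserted to yield products $(p_iq_j-p_jq_i)^2(r_iq_j-r_jq_i)^2$ whose vanishing, together with the simplex constraints, is claimed (with computer-algebra assistance) to force $q\in\{p,r\}$. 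So where you try to import extra semantics, the paper attempts a direct uniqueness argument from the single moment equation. Your instinct that a single scalar constraint should not determine $q$ is in tension with the paper's lemma, but in any case your write-up does not engage with that algebraic question at all, and that is what the paper's proof actually consists of.
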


It is well-known that the error of the performance of the best classifier (Bayes optimal) on a distribution of class mixtures is the total variance distance between them. That is:
\begin{lemma}\label{lemma:dtv}
 \cite{boucheron2005theory} The error of the Bayes optimal classifier trained on a uniform mixture of two class distributions is given by:
  $\min \limits_{\theta} \sum \mathbb{D}[L_{\theta}(x,y)] = \frac{1}{2} - \frac{1}{2} D_{\mathrm{TV}}(P(\mathbf{x}|y=1),P(\mathbf{x}|y=0)) $ where $L(\cdot)$ is the $0,1$ loss function and $\theta$ is parameterization over a class of classifiers that includes the Bayes optimal classifier. $D_{\mathrm{TV}}$ is the total variation distance between two distributions. $P(\mathbf{x}|y)$ are the class distributions in dataset $D$.
\end{lemma}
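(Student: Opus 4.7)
The plan is to give a direct computation of the Bayes error for a balanced two-class problem and then identify the resulting expression as $\tfrac12 - \tfrac12 D_{\mathrm{TV}}$. First I would write out the risk of an arbitrary classifier $\theta$ under the uniform mixture as
\begin{equation*}
\mathbb{E}[L_\theta(\mathbf{x},y)] = \tfrac12 \int P(\mathbf{x}|y=1)\,\mathbf{1}[\theta(\mathbf{x})=0]\,d\mathbf{x} + \tfrac12 \int P(\mathbf{x}|y=0)\,\mathbf{1}[\theta(\mathbf{x})=1]\,d\mathbf{x}.
\end{equation*}
Since the hypothesis class is assumed to contain the Bayes optimal classifier, the minimum over $\theta$ is achieved by the pointwise rule $\theta^{\star}(\mathbf{x})=\arg\max_y P(\mathbf{x}|y)$, which contributes $\min(P(\mathbf{x}|y=0), P(\mathbf{x}|y=1))$ to the integrand at each $\mathbf{x}$.

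Second, I would plug this into the expression above to obtain
\begin{equation*}
\min_\theta \mathbb{E}[L_\theta(\mathbf{x},y)] = \tfrac12 \int \min\bigl(P(\mathbf{x}|y=0),\,P(\mathbf{x}|y=1)\bigr)\,d\mathbf{x}.
\end{equation*}
Third, I would apply the elementary identity $\min(a,b)=\tfrac12(a+b) - \tfrac12|a-b|$ to each integrand, split the integral, and use the fact that the two class conditionals each integrate to $1$. That yields
\begin{equation*}
\min_\theta \mathbb{E}[L_\theta(\mathbf{x},y)] = \tfrac12 \cdot \tfrac12 (1+1) - \tfrac14 \int \bigl|P(\mathbf{x}|y=0) - P(\mathbf{x}|y=1)\bigr|\,d\mathbf{x}.
\end{equation*}
Finally I would invoke the standard identity $D_{\mathrm{TV}}(P,Q)=\tfrac12\int|P-Q|\,d\mathbf{x}$ to recognize the second term as $\tfrac12 D_{\mathrm{TV}}(P(\mathbf{x}|y=0),P(\mathbf{x}|y=1))$, giving the claim.

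There is essentially no main obstacle; the only subtle points are (i) handling the $\sum \mathbb{D}[\cdot]$ notation in the lemma statement as an expectation/integral over the mixture, and (ii) justifying that the pointwise maximum-likelihood classifier is admissible in the class parameterized by $\theta$, which is explicitly granted in the hypothesis. Everything else is the textbook derivation of the Bayes error via the total variation distance, so I would cite \cite{boucheron2005theory} at the end for completeness rather than re-deriving the $\min$-identity in detail.
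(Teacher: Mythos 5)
Your derivation is correct: the paper states Lemma~\ref{lemma:dtv} as a cited result from \cite{boucheron2005theory} and provides no proof of its own, and your argument is exactly the standard one that reference uses --- pointwise minimization giving $\tfrac12\int\min(P(\mathbf{x}|y=0),P(\mathbf{x}|y=1))\,d\mathbf{x}$, followed by the identity $\min(a,b)=\tfrac12(a+b)-\tfrac12|a-b|$ and the definition of total variation. The arithmetic checks out ($\tfrac12 - \tfrac14\int|P_0-P_1| = \tfrac12 - \tfrac12 D_{\mathrm{TV}}$), and your reading of the $\sum\mathbb{D}[\cdot]$ notation as the expectation under the uniform mixture is the intended one.
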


From Lemma \ref{lemma:dtv} and Theorem \ref{thm:weights}, where $\theta$ corresponds to the parametrization of the simple model, it follows that:
\begin{align}
\min \limits_{M',\theta~\mathrm{s.t.}~\mathbb{E}_{\tilde{D}_{\mathrm{train}}}[w_M']=1}  \mathbb{E}_{\tilde{D}}[w_{M'}(\mathbf{x,y})L_{\theta}(\mathbf{x},y)] = \frac{1}{2} - \frac{1}{2} D_{\mathrm{TV}}(P(\mathbf{x}|y=1), P(\mathbf{x}|y=0))
\end{align}

The right hand side is indeed the performance of the Bayes Optimal classifier on the test dataset $D_{\mathrm{test}}$. The left hand side justifies the regularized optimization in Step 4 of Algorithm \ref{algo:optweighting}, which is implemented as a least squares penalty. It also justifies the min-min optimization in Equation \ref{optS}, which is with respect to the weights and the parameters of the simple model.

\section{Experiments}
In this section we experiment on datasets from two different domains. The first is a public benchmark vision dataset named CIFAR-10. The other is a chip manufacturing dataset obtained from a large corporation. In both cases, we see the power of our method, $\mathrm{ProfWeight}$, in improving the simple model. 

We compare our method with training the simple model on the original unweighted dataset (Standard). We also compare with Distillation \cite{distill}, which is a popular method for training relatively simpler neural networks. We lastly compare results with weighting instances just based on the output confidence scores of the complex neural network (i.e. output of the last probe $P_k$) for the true label (ConfWeight). This can be seen as a special case of our method where $\alpha$ is set to the difference in errors between the simple model and the complex network.

We consistently see that our method outperforms these competitors. This showcases the power of our approach in terms of performance and generality, where the simple model may not be minimizing cross-entropy loss, as is usually the case when using Distillation.


\subsection{CIFAR-10}
\label{ss:cifar-10}
We now describe our methods on the CIFAR-10 dataset \footnote{We used the python version from https://www.cs.toronto.edu/~kriz/cifar.html.}. We report results for multiple $\alpha$'s of our ProfWeight scheme including ConfWeight which is a special case of our method. Further model training details than appear below are given in the supplementary materials.

\textbf{Complex Model:}
We use the popular implementation of the  Residual Network Model available from the TensorFlow authors \footnote{Code was obtained from: https://github.com/tensorflow/models/tree/master/research/resnet} where simple residual units are used (no bottleneck residual units are used). The complex model has $15$ Resnet units in sequence. The basic blocks each consist of two consecutive $3\times 3$ convolutional layers with either 64, 128, or 256 filters and our model has five of each of these units.\eat{(termed $\mathrm{Resunit:}$1-$\mathrm{x}$, $\mathrm{Resunit:}$2-$\mathrm{x}$, and $\mathrm{Resunit:}$3-$\mathrm{x}$, correspondingly, where $\mathrm{x}\in\{0,1,2,3,4\}$ because the complex model uses 5 of each type of block).} The first Resnet unit is preceded by an initial $3\times 3$ convolutional layer with $16$ filters. The last Resnet unit is succeeded by an average pooling layer followed by a fully connected layer producing $10$ logits, one for each class. Details of the $15$ Resnet units are given in the supplementary material. 

\textbf{Simple Models:}
We now describe our simple models that are smaller Resnets which use a subset of the $15$ Resnet units in the complex model. All simple models have the same initial convolutional layer and finish with the same average pooling and fully connected layers as in the complex model above. We have four simple models with 3, 5, 7, and 9 ResNet units. The approximate relative sizes of these models to the complex neural network are $1/5$, $1/3$, $1/2$, $2/3$, correpondingly. Further details are about the ResNet units in each model are given in the supplementary material. \eat{All the simple models have share the following common units: $\mathrm{Init}-\mathrm{conv}$, $\mathrm{Resunit:}1-0$, $\mathrm{Resunit:}2-0$, $\mathrm{Resunit:}3-0$, $\mathrm{Average Pool}$ and $\mathrm{Fully Connected}-10\mathrm{~logits}$ in the exact sequence as in Table 1.} \eat{The only difference is in the number of units of the type: $\mathrm{Resunit:}(\cdot)-\mathrm{x}$. We define $4$ Simple Models: SM-3, SM-5, SM-7 and SM-9. The number represents the total number of Resunits used in the models. We provide the additional Resunits that each simple model has in addition to the common ones described above for the complex model.}

\textbf{Probes Used:}
The set of units ${\cal U}$ (as defined in Section \ref{Formaldesc}) whose representations are used to train the probes are the units in Table 1 of the trained complex model. There are a total of $18$ units. 

\textbf{Training-Test Split:}
We split the available $50000$ training samples from the CIFAR-10 dataset into training set $1$ consisting of $30000$ examples and training set $2$ consisting of $20000$ examples. We split the $10000$ test set into a validation set of $500$ examples and a holdout test set of $9500$ examples. All final test accuracies of the simple models are reported with respect to this holdout test set. The validation set is used to tune all models and hyperparameters. 

\textbf{Complex Model Training:} The complex model is trained on training set $1$.  We obtained a test accuracy of $0.845$ and keep this as our complex model. We note that although this is suboptimal with respect to Resnet performances of today, we have only used $30000$ samples to train.

\textbf{Probe Training:}
\eat{Flattened representations of the complex model after unit $u$ is computed on training set $1$. }Linear probes $P_u(\cdot)$ are trained on representations produced by the complex model on training set $1$, each for $200$ epochs. The trained Probe confidence scores $P_u(R_u(x))$ are evaluated on samples in training set $2$. 
\begin{table*}
\centering
 \begin{tabular}{|c|c|c|c|c|}
  \hline
    \hfill & SM-3 & SM-5 & SM-7 & SM-9 \\ 
    \hline
     Standard & 73.15($\pm$ 0.7) & 75.78($\pm$0.5) & 78.76($\pm$0.35) & 79.9($\pm$0.34) \\
     \hline
ConfWeight & 76.27 ($\pm$0.48) & 78.54 ($\pm$0.36) & \textbf{81.46}($\pm$0.50) & 82.09 ($\pm$0.08) \\
\hline
Distillation & 65.84($\pm$0.60) & 70.09 ($\pm$0.19)
&73.4($\pm$0.64)& 77.30 ($\pm$0.16) \\
\hline
 ProfWeight$^{\text{ReLU}}$ & \textbf{77.52} ($\pm$0.01) & 78.24($\pm$0.01)
 & 80.16($\pm$0.01) & 81.65 ($\pm$0.01) \\
 \hline
 ProfWeight$^{\text{AUC}}$ & 76.56 ($\pm$0.62) & \textbf{79.25}($\pm$0.36)
 & \textbf{81.34}($\pm$0.49) & \textbf{82.42} ($\pm$0.36) \\
 \hline
 \end{tabular}
 \caption{Averaged accuracies (\%) of simple model trained with various weighting methods and distillation. The complex model achieved $84.5 \%$ accuracy. Weighting methods that average confidence scores of higher level probes perform the best or on par with the best in all cases. In each case, the improvement over the unweighted model is about $3-4\%$ in test accuracy. Distillation performs uniformly worse in all cases.}
 \label{tab:acc}
\end{table*}

\textbf{Simple Models Training:}
Each of the simpler models are trained only on training set $2$ consisting of $20000$ samples for $500$ epochs. All training hyperparameters are set to be the same as in the previous cases.  We train each simple model in Table 2 for the following different cases. Standard trains a simple unweighted model. ConfWeight trains a weighted model where the weights are the true label's confidence score of the complex model's last layer. Distillation trains the simple model using cross-entropy loss with soft targets obtained from the softmax ouputs of the complex model's last layer rescaled by temperature $t=0.5$ (tuned with cross-validation) as in distillation of \cite{distill}. ProfWeight$^{\text{AUC}}$ and ProfWeight$^{\text{ReLU}}$ train using Algorithm \ref{algo:weight} with Algorithms \ref{algo:profweighting} and \ref{algo:optweighting} for the weighting scheme, respectively.  Results are for layer 14 as the lowest layer (margin parameter $\alpha$  was set small, and $\alpha=0$ corresponded to layer 13). More details along with results for different temperature in distillation and margin in ProfWeight are given in the supplementary materials.

\eat{\begin{enumerate}
\item \textbf{Standard}: A simple unweighted model is trained.
\item \textbf{ConfWeight}: Each sample is weighted by the confidence score of the last layer of the complex model on the true label. As mentioned, this is a special case of ProfWeight.
\item \textbf{Distilled-temp-$t$}:
The simple model is trained using a cross entropy loss with soft targets obtained from the softmax ouputs of the last layer of the complex model\eat{(or equivalently the last linear probe)} rescaled by temperature $t$ as in distillation of \cite{distill}. Cross-validation is used to pick two temperatures that are competitive on the validation set ($t=0.5$ and $t=40.5$) in terms of validation accuracy for the simple models. See Figures 4 and 5 in the supplementary material for validation and test accuracies for model SM-9 with distillation at different temperatures.

\item \textbf{ProfWeight} ($>=\ell$): Algorithm \ref{algo:weight} is implemented with algorithm \ref{algo:profweighting} for the weighting scheme. We set margin parameter $\alpha=0$ leading to layer 13 being the lowest probe with higher accuracy than the standard model (w experiment with $\ell = 13,14$ and $15$ as the lowest probe for AUC computation). \eat{The rationale is that unweighted test scores of all the simple models in Table 2 are all below the probe precision of layer $16$ on training set $2$ but always above the probe precision at layer $12$.} The unweighted (i.e. Standard model) test accuracies from Table \ref{tab:acc} can be checked against the accuracies of different probes on training set $2$ given in Table 4 in the supplementary material.

\item \textbf{OptWeight} ($>=\ell$): Algorithm \ref{algo:weight} is implemented with algorithm \ref{algo:optweighting} for the weighting scheme, and also uses $\ell = 13,14$ and $15$ for determining confidence score input to the weight neural network.
\end{enumerate}
}

Test accuracies (their means and standard deviations each averaged over about $4$ runs each) of each of the $4$ simple models in Table 2 trained in $6$ different ways described above are provided in Figure 3. Their numerical values in tabular form are given in Table \ref{tab:acc}.

\eat{
\begin{figure*}
\label{Fig:main}
\centering
\includegraphics[width=18cm]{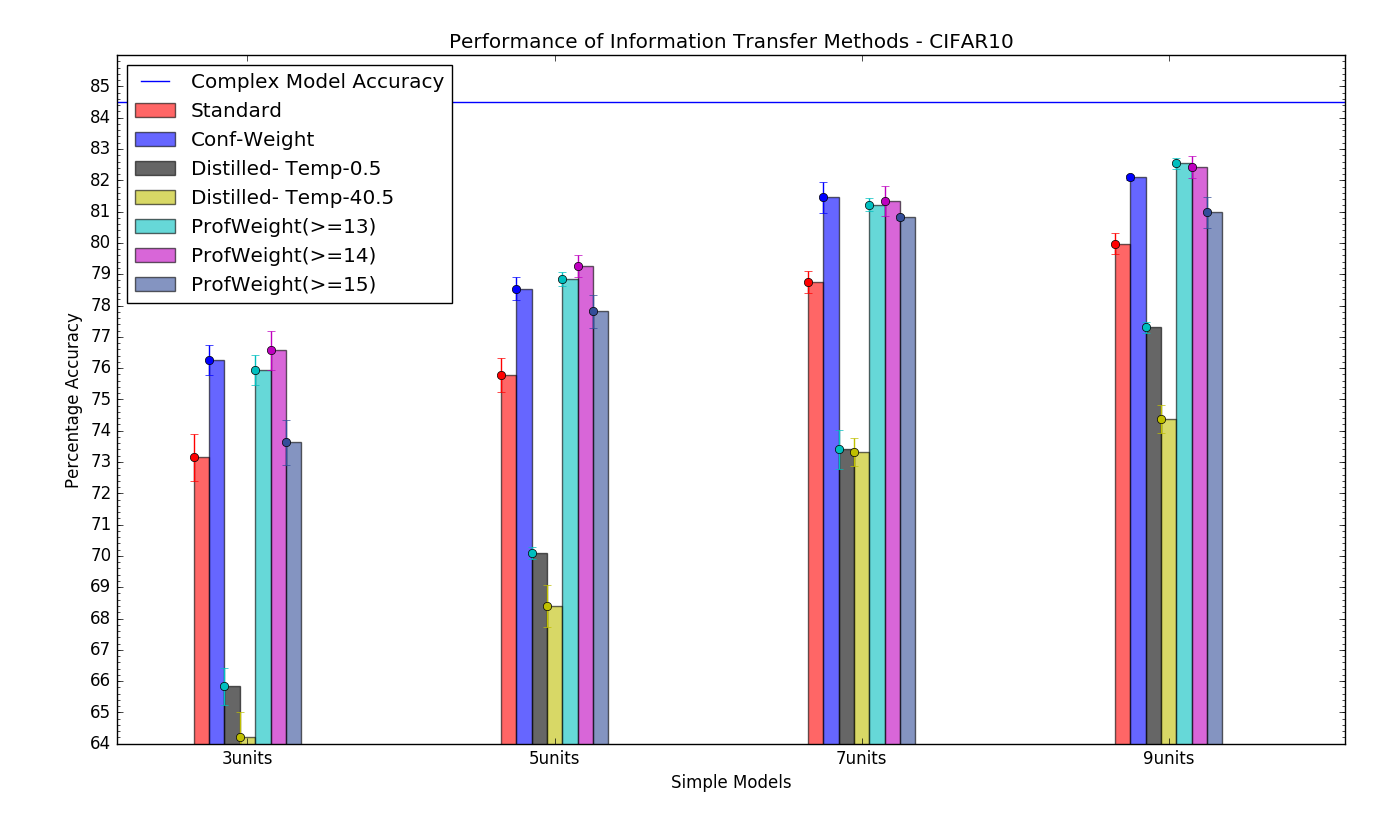}
\caption{Accuracies of $4$ different simple models in Table 2 trained with various ways of weighting with the probe confidence scores is shown. One can see that averages of probe confidence scores of units above and equal to $13$ (cyan) or $14$(magenta) have the highest test accuracies almost in all the cases. The unweighted (red) and the distilled models (black,yellow) are baselines that existed before our work.}
\end{figure*}
}

\textbf{Results:}
From Table \ref{tab:acc}, it is clear that in all cases, the weights corresponding to the AUC of the probe confidence scores from unit $13$ or $14$ and upwards are among the best in terms of test accuracies. They significantly outperform distillation-based techniques and, further, are better than the unweighted test accuracies by $3-4\%$. This shows that our $\mathrm{Prof Weight}$ algorithm performs really well. We notice that in this case, the confidence scores from the last layer or final probe alone are quite competitive as well. This is probably due to the complex model accuracy not being very high, having been trained on only $30000$ examples. This might seem counterintuitive, but a highly accurate model will find almost all examples easy to classify at the last layer leading to confidence scores that are uniformly close to 1. Weighting with such scores then, is almost equivalent to no weighting at all. This is somewhat witnessed in the manufacturing example where the complex neural network had an accuracy in the 90s and ConfWeight did not enhance the CART model to the extent ProfWeight did. In any case, weighting based on the last layer is just a special instance of our method ProfWeight, which is seen to perform quite well. 


\begin{figure}[t]
  \centering  
      \includegraphics[width=0.8\textwidth]{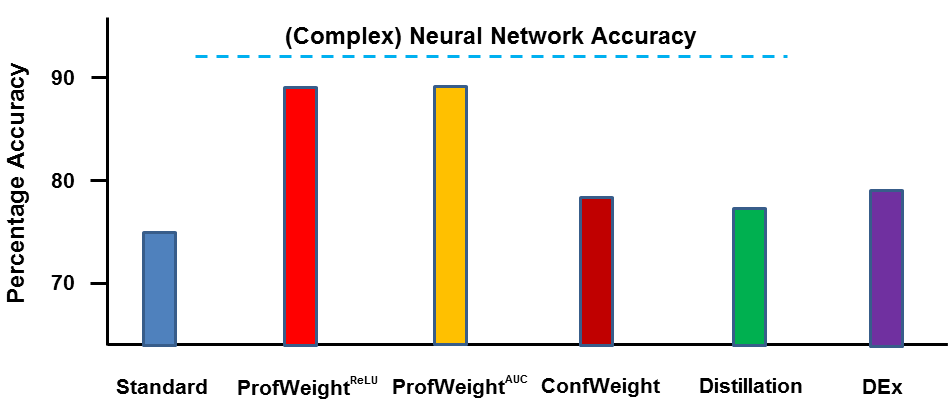}  
  \caption{Above we show the performance of the different methods on the manufacturing dataset. \eat{Standard is unweighted training of the simple (interpretable) model, which is CART, in this case. We see that our methods, ProfWeight$^{\text{AUC}}$ and ProfWeight$^{\text{ReLU}}$, significantly enhances CART performance, and is much better than the others. ConfWeight is a special case of ProfWeight, where $\alpha=e_S-e_k$.}}
  \label{manuf}
\end{figure}

\subsection{Manufacturing}

We now describe how our method not only improved the performance of a CART model, but produced operationally significant results in a semi-conductor manufacturing setting.

\noindent\textbf{Setup:} We consider an etching process in a semi-conductor manufacturing plant. The goal is to predict the quantity of metal etched on each wafer -- which is a collection of chips -- without having to explicitly measure it using high precision tools, which are not only expensive but also substantially slow down the throughput of a fab. If $T$ denotes the required specification and $\gamma$ the allowed variation, the target we want to predict is quantized into three bins namely: $(-\infty,T-\gamma)$, $(T+\gamma,\infty)$ and within spec which is $T\pm\gamma$. We thus have a three class problem and the engineers goal is not only to predict these classes accurately but also to obtain insight into ways that he can improve his process.

For each wafer we have 5104 input measurements for this process. The inputs consist of acid concentrations, electrical readings, metal deposition amounts, time of etching, time since last cleaning, glass fogging and various gas flows and pressures. The number of wafers in our dataset was 100,023. Since these wafers were time ordered we split the dataset sequentially where the first 70\% was used for training and the most recent 30\% was used for testing. Sequential splitting is a very standard procedure used for testing models in this domain, as predicting on the most recent set of wafers is more indicative of the model performance in practice than through testing using random splits of train and test with procedures such as 10-fold cross validation.

\noindent\textbf{Modeling and Results:} We built a neural network (NN) with an input layer and five fully connected hidden layers of size 1024 each and a final softmax layer outputting the probabilities for the three classes. The NN had an accuracy of 91.2\%. The NN was, however, not the model of choice for the fab engineer who was 
more familiar and comfortable using decision trees.

Given this, we trained a CART based decision tree on the dataset. As seen in figure \ref{manuf}, its accuracy was 74.3\%. Given the big gap in performance between these two methods the engineers wanted an improved interpretable model whose insights they could trust. We thus tested by weighting instances based on the actual confidence scores outputted by the NN and then retraining CART. This improved the CART performance slightly to 77.1\%. We then used ProfWeight$^{\text{AUC}}$, where $\alpha$ was set to zero, to train CART whose accuracy bumped up significantly to 87.3\%, which is a 13\% lift. Similar gains were seen for ProfWeight$^{\text{ReLU}}$ where accuracy reached 87.4\%. For Distillation we tried 10 different temperature scalings in multiples of 2 starting with 0.5. The best distilled CART produced a slight improvement in the base model increasing its accuracy to 75.6\%. We also compared with the decision tree extraction (DEx) method \cite{bastani2017interpreting} which had a performance of 77.5\%.

\noindent\textbf{Operationally Significant Human Actions:} We reported the top features based on the improved model to the engineer. These features were certain pressures, time since last cleaning and certain acid concentrations. The engineer based on this insight started controlling the acid concentrations more tightly. This improved the total number of within spec wafers by 1.3\%. Although this is a small number, it has huge monetary impact in this industry, where even 1\% increase in yield can amount to billions of dollars in savings.

\section{Related Work and Discussion}

Our information transfer procedures based on confidence measures are related to Knowledge Distillation and learning with privileged information \cite{priv16}. The key difference is in the way we use information. We weight training instances by functions, such as the average, of the confidence profiles of the training label alone. This approach, unlike Distillation \cite{distill,fitnet,distillnew}, is applicable in broader settings like when target models are classifiers optimized using empirical risk (e.g., SVM) where risk could be any loss function. By weighting instances, our method uses any available target training methods. Distillation works best with cross entropy loss and other losses specifically designed for this purpose. Typically, distilled networks are usually quite deep. They would not be interpretable or be able to respect tight resource constraints on sensors. In \cite{pc1}, the authors showed that primarily shallow networks can be deployed on memory constrained devices. The only papers to our knowledge that do thin down CNNs came about prior to ResNet and the memory requirements are higher even compared to our complex Resnet model (2.5 M vs 0.27 M parameters) \cite{fitnet}. 

It also interesting to note that calibrated scores of a highly accurate model does not imply good transfer. This is because post-calibration majority of the confidence scores would still be high (say >90\%). These scores may not reflect the true hardness. Temperature scaling is one of the most popular methods for calibration of neural networks \cite{calib}. Distillation which involves temperature scaling showed subpar performance in our experiments.

There have been other strategies \cite{modelcompr,modelcompr2,bastani2017interpreting} to transfer information from bigger models to smaller ones, however, they are all similar in spirit to Distillation, where the complex models predictions are used to train a simpler model. As such, weighting instances also has an intuitive justification where viewing the complex model as a teacher and the TM as a student, the teacher is telling the student which aspects he/she should focus on (i.e. easy instances) and which he/she could ignore.

There are other strategies that weight examples although their general setup and motivation is different, for instance curriculum learning (CL) \cite{curriculumL} and boosting \cite{boost}. CL is a training strategy where first easy examples are given to a learner followed by more complex ones. The determination of what is simple as opposed to complex is typically done by a human. There is usually no automatic gradation of examples that occurs based on a machine. Also sometimes the complexity of the learner is increased during the training process so that it can accurately model more complex phenomena. In our case however, the complexity of the simple model is assumed fixed given applications in interpretability \cite{tip,montavon2017methods,lime} and deployment in resource limited settings \cite{pc1,pc2}. Moreover, we are searching for just one set of weights which when applied to the original input (not some intermediate learned representations) the fixed simple model trained on it gives the best possible performance. Boosting is even more remotely related to our setup. In boosting there is no high performing teacher and one generally grows an ensemble of weak learners which as just mentioned is not reasonable in our setting. Hard examples w.r.t. a previous 'weak' learner are highlighted for subsequent training to create diversity. In our case, hard examples are w.r.t. an accurate complex model. This means that these labels are near random. Hence, it is important to highlight these relatively easier examples when training the simple model.

In this work we proposed a strategy to improve simple models, whose complexity is fixed, with the help of a high performing neural network. The crux of the idea was to weight examples based on a function of the confidence scores based on intermediate representations of the neural network at various layers for the true label. We accomplished this by attaching probes to intermediate layers in order to obtain confidence scores. As observed in the experiments, our idea of weighting examples seems to have a lot of promise where we want to improve (interpretable) models trained using empirical risk minimization or in cases where we want to improve a (really) small neural network that will respect certain power and memory constraints. In such situations Distillation seems to have limited impact in creating accurate models. 

Our method could also be used in small data settings which would be analogous to our setup on CIFAR 10, where the training set for the complex and simple models were distinct. In such a setting, we would obtain soft predictions from the probes of the complex model for the small data samples and use ProfWeight with these scores to weight the smaller training set. A complementary metric that would also be interesting to look at is the time (or number of epochs) it takes to train the simple model on weighted examples to reach the unweighted accuracy. If there is huge savings in time, this would be still useful in power constrained settings.

In the future, we would like to explore more adaptive schemes and hopefully understand them theoretically as we have done in this work. Another potentially interesting future direction is to use a combination of the improved simple model and complex model to make decisions. For instance, if we know that the simple models performance (almost) matches the performance of the complex model on a part of the domain then we could use it for making predictions for the corresponding examples and the complex model otherwise. This could have applications in interpretability as well as in speeding up inference for real time applications where the complex models could potentially be large.
\section*{Acknowledgement}

We would like to thank the anonymous area chair and reviewers for their constructive comments.

\bibliographystyle{abbrv}
\bibliography{ExAbsent} 

\newpage
\setcounter{page}{1}
\appendix

\section{Additional Tables and Figures}
\begin{table}[h!]
 \begin{center}    
    \begin{tabular}{|l|c|} 
    \hline 
      \textbf{Units} & \textbf{Description} \\
      \hline
 Init-conv & $\left[ \begin{array}{c}
 3 \times 3~\mathrm{conv},~16 
\end{array} \right] $\\
\hline
      Resunit:1-0 & $\left[ \begin{array}{c}
 3\times 3~\mathrm{conv},~64 \\
 3\times 3~\mathrm{conv},~64
\end{array}\right]$  \\
 \hline
    (Resunit:1-x)$\times$ 4 & $\left[ \begin{array}{c}
 3\times 3~\mathrm{conv},~64 \\
 3\times 3~\mathrm{conv},~64
\end{array}\right] \times 4$ \\
\hline
      (Resunit:2-0)& $\left[ \begin{array}{c}
 3\times 3~\mathrm{conv},~128 \\
 3\times 3~\mathrm{conv},~128
\end{array}\right] $ \\
 \hline 
   (Resunit:2-x)$\times$ 4& $\left[ \begin{array}{c}
 3\times 3~\mathrm{conv},~128 \\
 3\times 3~\mathrm{conv},~128
\end{array}\right] \times 4 $ \\
   \hline 
       (Resunit:3-0)& $\left[ \begin{array}{c}
 3\times 3~\mathrm{conv},~256 \\
 3\times 3~\mathrm{conv},~256
\end{array}\right] $ \\
\hline
      (Resunit:3-x)$\times$ 4& $\left[ \begin{array}{c}
 3\times 3~\mathrm{conv},~256 \\
 3\times 3~\mathrm{conv},~256
\end{array}\right] \times 4 $ \\
\hline 
    \multicolumn{2}{|c|}{Average Pool}  \\
 \hline 
     \multicolumn{2}{|c|}{Fully Connected - 10 logits} \\
  \hline   
    \end{tabular}
     
  \end{center}
  \label{tab:cm}
\caption{$18$ unit Complex Model with $15$ ResNet units used on CIFAR-10 experiments in Section \ref{ss:cifar-10}}
\end{table}

\begin{table}[h!]
\label{tab:sm}
  \begin{center}    
    \begin{tabular}{|l|c|c|} 
    \hline 
      \textbf{Simple Model IDs} & \textbf{Additional Resunits}& \textbf{Rel. Size} \\
      \hline
 SM-3 & None & $\approx$ 1/5\\
\hline
      SM-5 & (Resunit:1-x)$\times 1$ & $\approx$ 1/3\\ & (Resunit:2-x)$\times 1$ & \\
 \hline
    SM-7 & (Resunit:1-x)$\times 2$ &\\
    & (Resunit:2-x)$\times 1$ & $\approx$ 1/2\\
    & (Resunit:3-x)$\times 1$ &\\
\hline
      SM-9 & (Resunit:1-x)$\times 2$ & \\
    & (Resunit:2-x)$\times 2$ & $\approx$ 2/3\\
    & (Resunit:3-x)$\times 2$ &\\
 \hline 
    \end{tabular}
     \caption{Additional Resnet units in the Simple Models apart from the commonly shared ones. The last column shows the approximate size of the simple models relative to the complex neural network model in the previous table.}   
  \end{center}
\end{table}

\vspace{20pt}
\begin{table}[htbp]
\centering
 \begin{tabular}{|c|c|c|c|c|c|c|c|c|c|c|c|c|c|c|c|c|c|}
  \hline
   Probes & 1 & 2 & 3 &4 &5&6&7&8&9 \\
   \hline
Training Set $2$ Accuracy & 0.298 & 0.439 
& 0.4955 &  0.53855 & 0.5515 
& 0.5632 
& 0.597 
& 0.6173 
& 0.6418 
\\
\hline
Probes &10 & 11 & 12 & 13 & 14 & 15 & 16 & 17 & 18 \\
\hline
Training Set $2$ Accuracy & 0.66104 &0.6788 &0.70855 
& 0.7614 
& 0.7963 
& 0.82015 
&  0.8259 
& 0.84214 
& 0.845\\
   \hline 
 \end{tabular}
 \caption{Probes at various units and their accuracies on the training set $2$ for the CIFAR-10 experiment. This is used in the $\mathrm{ProfWeight}^{\mathrm{AUC}}$ algorithm to choose the unit above which confidence scores needs to be averaged.}
 \label{tab:probe}
\end{table}
\newpage
\section{Additional Training Details}
\textbf{CIFAR-10 Experiments in Section \ref{ss:cifar-10}}

\textbf{Complex Model Training:} We trained with an $\ell$-2 weight decay rate of $0.0002$, sgd optimizer with Nesterov momentum (whose parameter is set to 0.9), $600$ epochs and batch size $128$. Learning rates are according to the following schedule: $0.1$ till $40k$ training steps, $0.01$ between $40k$-$60k$ training steps, $0.001$ between $60k-80k$ training steps and $0.0001$ for $>80k$ training steps. This is the standard schedule followed in the code by the Tensorflow authors (code is taken from:\\ https://github.com/tensorflow/models/tree/master/research/resnet). We keep the learning rate schedule invariant across all our results.

\textbf{Simple Models Training:}
\begin{enumerate}
\item \textbf{Standard}: We train  
a simple model as is on the training set $2$.
\item \textbf{ConfWeight}: We weight each sample in training set $2$ by the confidence score of the last layer of the complex model on the true label. As mentioned before, this is a special case of our method, ProfWeight.
\item \textbf{Distillation}:
We train the simple model using a cross entropy loss with soft targets. Soft targets are obtained from the softmax ouputs of the last layer of the complex model (or equivalently the last linear probe) rescaled by temperature $t$ as in distillation of \cite{distill}. By using cross validation, we picked the temperature that performed best on the validation set in terms of validation accuracy for the simple models. We cross-validated over temperatures from the set $\{0.5,3,10.5,20.5,30.5,40.5,50\}$.
See Figures 3 and 4 for validation and test accuracies for SM-9 model with distillation at different temperatures.

\item \textbf{ProfWeight}: Implementation of our $\mathrm{ProfWeight}$ algorithm where the weight of every sample in training set $2$ is set to a function (depending on the choice of $\mathrm{ReLu}$ or $\mathrm{AUC}$) of the probe confidence scores of the true label corresponding to units above the $14$-th unit. The rationale is that the probe precision at layer $14$ onwards are above the unweighted test scores of all the simple models in Table 4. The unweighted (i.e. Standard model) test accuracies from Table \ref{tab:acc} can be checked against the accuracies of different probes on training set $2$ given in Table 4 in the supplementary material.
\end{enumerate}

\begin{figure*}[htbp]
\parbox{0.75\linewidth}{
\centering
\includegraphics[width=9cm]{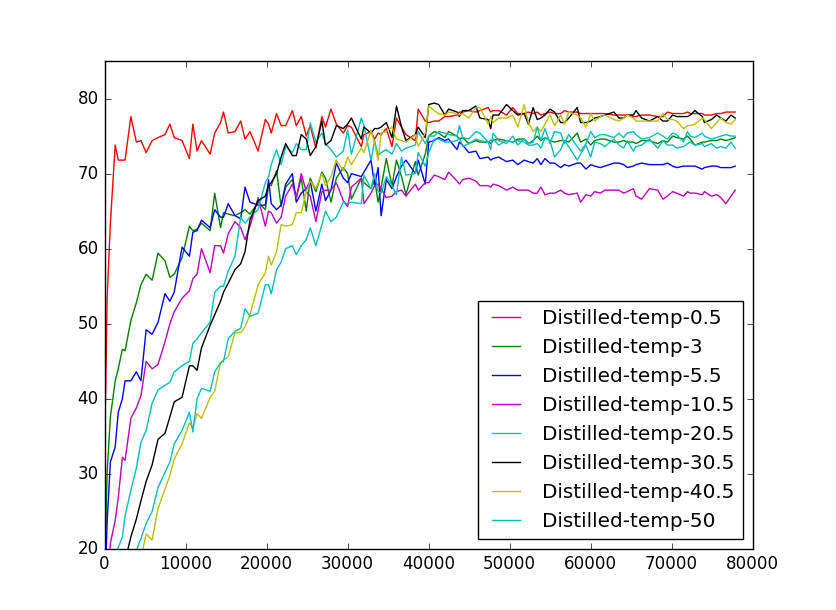}
\caption{Plot of validation set accuracy as a function of training steps for SM-9 simple model. The training is done using distillation. Validation accuracies for different temperatures used in distillation are plotted.}
\label{Fig:distill_val}
}
\end{figure*}
\begin{figure*}[htbp]
\parbox{.75\linewidth}{
\centering
\begin{tabular}{|c|c|}
\hline
 Distillation Temperatures & Test Accuracy of SM-9 \\
 \hline
 0.5 & 0.7719999990965191 \\
 3.0 & 0.709789470622414 \\
 5.0 & 0.7148421093037254 \\
 10.5 & 0.6798947390757109 \\
 20.5 & 0.7237894786031622 \\
 30.5 & 0.7505263184246264 \\
 40.5 & 0.7513684191201863 \\
 50 & 0.7268421022515548 \\
 \hline
\end{tabular}
\caption{Test Set accuracies of various versions of simple model SM-9 trained using distilled final layer confidence scores at various temperatures. The top two are for temperatures $0.5$ and $40.5$. }
\label{tab:distill}
}
\end{figure*}

\section{Proof of Theorem \ref{thm:weights}}
  It is enough to show that, for two fixed distributions $P(x|y)$and $P_M(x|y)$ with density functions $f(x|y)$ and $f_M(x|y)$:
  $\int \frac{f(x|y) f_M(x|y)}{r(x)} d(x) =1,~\int r(x)=1,~r(x)>0 `\forall x$ means that $r(x)=f(x|y)$ or $f_M(x|y)$.  We show this for discrete distributions below.

\begin{lemma}
If $p$, $q$ and $r$ are three $n$ dimensional distributions then, $\sum_{x}\frac{p(x)r(x)}{q(x)}=1$ only if either $q=p$ or $q=r$ pointwise.
\end{lemma}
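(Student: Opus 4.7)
My plan is to clear denominators in the constraint $\sum_x p(x)r(x)/q(x) = 1$ and search for an algebraic factorization that pins $q$ pointwise to either $p$ or $r$. As a base case, I would first handle $n = 2$ explicitly: writing $p = (a, 1-a)$, $q = (b, 1-b)$, $r = (c, 1-c)$, the constraint
$$\frac{ac}{b} + \frac{(1-a)(1-c)}{1-b} = 1$$
becomes, after multiplying through by $b(1-b)$ and expanding, the identity $ac - ab - bc + b^2 = 0$, which factors as $(a - b)(c - b) = 0$ and immediately gives $q = p$ or $q = r$. This base case exposes the mechanism behind the lemma and suggests what to aim for in higher dimensions.

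For general $n$, my strategy is to reduce to the $n=2$ case by pairwise conditioning. For every pair of coordinates $(x_i, x_j)$, I would restrict $p, q, r$ to $\{x_i, x_j\}$ by conditioning on that event and try to derive a two-coordinate constraint of the same shape. If this reduction is clean, the $n=2$ factorization forces $q$ to agree with either $p$ or $r$ on each pair, and a subsequent combinatorial argument must rule out the ``mixed'' pattern where $q$ aligns with $p$ on some pairs and $r$ on others, leaving $q = p$ or $q = r$ globally. A parallel and possibly simpler route is variational: the map $q \mapsto \sum_x p(x) r(x)/q(x)$ is strictly convex on the open simplex and Cauchy--Schwarz gives
$$\sum_x \frac{p(x) r(x)}{q(x)} \ \geq\ \Bigl(\sum_x \sqrt{p(x) r(x)}\Bigr)^{2},$$
with equality iff $q(x) \propto \sqrt{p(x) r(x)}$. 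Since the Bhattacharyya coefficient is at most $1$, the constraint $=1$ corresponds to a thin level set whose structure can be analyzed via KKT conditions on the simplex.

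The main obstacle I expect is the pairwise-reduction step: the constraint is a single global sum rather than a collection of pairwise relations, so extracting a clean two-coordinate constraint requires absorbing the contributions of the remaining coordinates into the normalization, which is not immediate and likely forces either an induction on the support size of $q$ or a separate treatment of boundary cases where some $q(x) = 0$. If the pairwise route stalls, the variational/KKT route should still characterize candidate $q$'s satisfying the equality, but even there one must carefully eliminate spurious solutions not of the form $q = p$ or $q = r$ --- this elimination is the delicate part of the argument.
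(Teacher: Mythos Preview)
Your $n=2$ base case is correct and coincides with the paper's own argument. For $n\geq 3$, however, the lemma as stated is \emph{false}, so neither your pairwise-reduction route nor your variational/KKT route can possibly close, and the paper's proof is equally flawed. A concrete counterexample for $n=3$: take $p=(1/2,\,1/4,\,1/4)$, $r=(1/4,\,1/2,\,1/4)$, and $q=(1/3,\,1/2,\,1/6)$. Then
\[
\sum_x \frac{p(x)r(x)}{q(x)} \;=\; \frac{1/8}{1/3}+\frac{1/8}{1/2}+\frac{1/16}{1/6} \;=\; \tfrac{3}{8}+\tfrac{1}{4}+\tfrac{3}{8} \;=\; 1,
\]
yet $q$ equals neither $p$ nor $r$.

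The obstruction is structural, and your own convexity observation already exposes it. The map $F(q)=\sum_x p(x)r(x)/q(x)$ is strictly convex on the open simplex with minimum value $\bigl(\sum_x\sqrt{p(x)r(x)}\bigr)^2$, which is strictly less than $1$ whenever $p\neq r$. Hence the sublevel set $\{F\leq 1\}$ is a convex body with nonempty interior in the $(n-1)$-dimensional simplex, and the level set $\{F=1\}$ is its $(n-2)$-dimensional boundary: it contains $p$ and $r$ but also a continuum of other distributions once $n\geq 3$. The ``delicate elimination of spurious solutions'' you flagged as the hard step is therefore not merely delicate but impossible---those solutions are genuine. The paper's asserted factorization into terms of the form $(p_iq_j-p_jq_i)^2(r_iq_j-r_jq_i)^2$ does not hold for general $p\neq r$; only in the degenerate case $p=r$ (where the minimum of $F$ equals $1$ and the level set collapses to the single point $q=p$) does an argument of that kind go through.
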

\begin{proof}
We first describe proofs for specific cases so as to provide some intuition about the general result. 

If $p$, $r$ and $q$ are two dimensional distributions then if $\sum_{i=1,2}\frac{p_ir_i}{q_i}=1$ we have,
\begin{equation*}
\begin{split}
\sum_{i=1,2}\frac{p_ir_i}{q_i}&=1\\
(q_1+q_2)\sum_{i=1,2}p_ir_i&=\prod_{i=1,2}q_i (p_1+p_2)(r_1+r_2)\\
(r_1q_2-r_2q_1)(p_1q_2-p_2r_1)&=0
\end{split}
\end{equation*}

This implies either $\frac{q_1}{q_2}=\frac{p_1}{p_2}$ or $\frac{q_1}{q_2}=\frac{r_1}{r_2}$. Without loss of generality (w.l.o.g.) assume $\frac{q_1}{q_2}=\frac{p_1}{p_2}$. Then $\frac{q_1}{q_2}+\frac{q_2}{q_2}=\frac{p_1}{p_2}+\frac{p_2}{p_2}$ $\Rightarrow$ $\frac{1}{q_2}=\frac{1}{p_2}$ or $p_2=q_2$ which proves our result.

If $p=r$, then for $n$ dimensional distributions we have,

\begin{equation*}
\begin{split}
\sum_{i=1}^n\frac{p_i^2}{q_i}&=1\\
\sum_{i=1}^n\left(p_i^2\prod_{j\neq i}q_j\right)&=\prod_{i=1}^nq_i\\
\left(\sum_{i=1}^nq_i\right)\sum_{i=1}^n\left(p_i^2\prod_{j\neq i}q_j\right)&=\prod_{i=1}^nq_i\left(\sum_{i=1}^np_i\right)^2\\
\sum_{i=1}^n\sum_{j=1,j\neq i}^n\prod_{k\neq i,j}q_k(p_iq_j-p_jq_i)^2 &= 0
\end{split}
\end{equation*}

This implies that the polynomial is pointwise zero only if $\frac{p_i}{p_j}=\frac{q_i}{q_j}$ $\forall i,j$. This again gives our result of $p=q$.

For the general case analogous to previous results we get polynomials $(p_iq_j-p_jq_i)^2(r_iq_j-r_jq_i)^2$ multiplied by positive constants that must be pointwise 0. Thus, $\frac{p_i}{p_j}=\frac{q_i}{q_j}$ or $\frac{r_i}{r_j}=\frac{q_i}{q_j}$. W.l.o.g. we can assume that for half or more of the cases the ratio of $p_i$, $p_j$s are equal to the ratio of $q_i$, $q_j$s. In this case, only these equations can be considered along with constraints ensuring $p$ and $q$ are distributions and must sum to 1. Since the number of equations with ratios grow quadratically in the number of variables the hardest cases to show are when we have 4 (or fewer) variables. Using tools such as mathematica one can show that the other ratios also have to be equal or that $p=q$.

\end{proof}

\eat{
\section{Supplemental Material}
This supplementary material contains additional tables and figures.
\subsection{Additional Tables and Figures}
\begin{table}[h!]
 \begin{center}    
    \begin{tabular}{|l|c|} 
    \hline 
      \textbf{Units} & \textbf{Description} \\
      \hline
 Init-conv & $\left[ \begin{array}{c}
 3 \times 3~\mathrm{conv},~16 
\end{array} \right] $\\
\hline
      Resunit:1-0 & $\left[ \begin{array}{c}
 3\times 3~\mathrm{conv},~64 \\
 3\times 3~\mathrm{conv},~64
\end{array}\right]$  \\
 \hline
    (Resunit:1-x)$\times$ 4 & $\left[ \begin{array}{c}
 3\times 3~\mathrm{conv},~64 \\
 3\times 3~\mathrm{conv},~64
\end{array}\right] \times 4$ \\
\hline
      (Resunit:2-0)& $\left[ \begin{array}{c}
 3\times 3~\mathrm{conv},~128 \\
 3\times 3~\mathrm{conv},~128
\end{array}\right] $ \\
 \hline 
   (Resunit:2-x)$\times$ 4& $\left[ \begin{array}{c}
 3\times 3~\mathrm{conv},~128 \\
 3\times 3~\mathrm{conv},~128
\end{array}\right] \times 4 $ \\
   \hline 
       (Resunit:3-0)& $\left[ \begin{array}{c}
 3\times 3~\mathrm{conv},~256 \\
 3\times 3~\mathrm{conv},~256
\end{array}\right] $ \\
\hline
      (Resunit:3-x)$\times$ 4& $\left[ \begin{array}{c}
 3\times 3~\mathrm{conv},~256 \\
 3\times 3~\mathrm{conv},~256
\end{array}\right] \times 4 $ \\
\hline 
    \multicolumn{2}{|c|}{Average Pool}  \\
 \hline 
     \multicolumn{2}{|c|}{Fully Connected - 10 logits} \\
  \hline   
    \end{tabular}
     \caption{$18$ unit Complex Model with $15$ ResNet units.}   
  \end{center}
  \label{tab:cm}
\caption{Residual Network Model used as the complex model for CIFAR-10 experiments in Section \ref{ss:cifar-10}}
\end{table}

\begin{table}[h!]
\label{tab:sm}
  \begin{center}    
    \begin{tabular}{|l|c|c|} 
    \hline 
      \textbf{Simple Model IDs} & \textbf{Additional Resunits}& \textbf{Rel. Size} \\
      \hline
 SM-3 & None & $\approx$ 1/5\\
\hline
      SM-5 & (Resunit:1-x)$\times 1$ & $\approx$ 1/3\\ & (Resunit:2-x)$\times 1$ & \\
 \hline
    SM-7 & (Resunit:1-x)$\times 2$ &\\
    & (Resunit:2-x)$\times 1$ & $\approx$ 1/2\\
    & (Resunit:3-x)$\times 1$ &\\
\hline
      SM-9 & (Resunit:1-x)$\times 2$ & \\
    & (Resunit:2-x)$\times 2$ & $\approx$ 2/3\\
    & (Resunit:3-x)$\times 2$ &\\
 \hline 
    \end{tabular}
     \caption{Additional Resnet units in the Simple Models apart from the commonly shared ones. The last column shows the approximate size of the simple models relative to the complex neural network model in the previous table.}   
  \end{center}
\end{table}

\vspace{20pt}
\begin{table}[htbp]
\centering
 \begin{tabular}{|c|c|c|c|c|c|c|c|c|c|c|c|c|c|c|c|c|c|}
  \hline
   Probes & 1 & 2 & 3 &4 &5&6&7&8&9 \\
   \hline
Training Set $2$ Accuracy & 0.298 & 0.439 
& 0.4955 &  0.53855 & 0.5515 
& 0.5632 
& 0.597 
& 0.6173 
& 0.6418 
\\
\hline
Probes &10 & 11 & 12 & 13 & 14 & 15 & 16 & 17 & 18 \\
\hline
Training Set $2$ Accuracy & 0.66104 &0.6788 &0.70855 
& 0.7614 
& 0.7963 
& 0.82015 
&  0.8259 
& 0.84214 
& 0.845\\
   \hline 
 \end{tabular}
 \caption{Probes at various units and their accuracies on the training set $2$ for the CIFAR-10 experiment. This is used in the $\mathrm{ProfWeight}$ algorithm to choose the unit above which confidence scores needs to be averaged.}
 \label{tab:probe}
\end{table}
\begin{table*}
 \begin{tabular}{|c|c|c|c|c|}
  \hline
    \hfill & SM-3 & SM-5 & SM-7 & SM-9 \\ 
    \hline
     Standard & 73.15\%($\pm$ 0.7\%) & 75.78\%($\pm$0.5\%) & 78.76\%($\pm$0.35\%) & 79.9\%($\pm$0.34\%) \\
     \hline
ConfWeight & \textbf{76.27} \%($\pm$0.48\%) & 78.54 \% ($\pm$0.36\%) & \textbf{81.46}\%($\pm$0.50\%) & 82.09 \%($\pm$0.08\%) \\
\hline
Distilled-temp-$0.5$ & 65.84\%($\pm$0.60\%) & 70.09\%($\pm$0.19\%)
&73.4\%($\pm$0.64\%)& 77.30\%($\pm$0.16\%) \\
\hline
Distilled-temp-$40.5$ & 64.22\%($\pm$0.80\%) & 68.40 \% ($\pm$0.67\%) & 73.3\%($\pm$0.44\%) & 74.38 \%($\pm$0.44\%) \\
\hline
 ProfWeight ($>=13$) &75.93\%($\pm$0.48\%) & \textbf{78.85}\%($\pm$0.22\%) &81.22 \%($\pm$0.20\%) & \textbf{82.54}\%($\pm$0.18\%) \\
 \hline
 ProfWeight ($>=14$) & \textbf{76.56}\% ($\pm$0.62\%) & \textbf{79.25}\%($\pm$0.36\%)
 & \textbf{81.34}\%($\pm$0.49\%) & \textbf{82.42}\%($\pm$0.36\%) \\
 \hline
 ProfWeight ($>=15$) & 73.64\%($\pm$0.72\%) & 77.81\%($\pm$0.52\%) & 80.81\%($\pm$0.07\%) & 80.97\%($\pm$0.49\%)\\
 \hline
 \end{tabular}
 \caption{Averaged accuracies of simple model trained with various weighting methods and distillation. Weighting methods that average confidence scores of higher level probes perform the best or on par with the best in all cases. In each case, the improvement over the unweighted model is about $3-4\%$ in test accuracy. Distillation performs uniformly worse in all cases under the two best chosen temperatures.}
 \label{tab:acc_appendix}
\end{table*}

\subsection{Additional Training Details}
\textbf{CIFAR-10 Experiments in Section \ref{ss:cifar-10}}

\textbf{Complex Model Training:} We trained with an $\ell$-2 weight decay rate of $0.0002$, sgd optimizer with Nesterov momentum (whose parameter is set to 0.9), $600$ epochs and batch size $128$. Learning rates are according to the following schedule: $0.1$ till $40k$ training steps, $0.01$ between $40k$-$60k$ training steps, $0.001$ between $60k-80k$ training steps and $0.0001$ for $>80k$ training steps. This is the standard schedule followed in the code by the Tensorflow authors\footnote{Code is taken from:\\ https://github.com/tensorflow/models/tree/master/research/resnet.}. We keep the learning rate schedule invariant across all our results.

\textbf{Simple Models Training:}
\begin{enumerate}
\item \textbf{Standard}: We train  
a simple model as is on the training set $2$.
\item \textbf{ConfWeight}: We weight each sample in training set $2$ by the confidence score of the last layer of the complex model on the true label. As mentioned before, this is a special case of our method, ProfWeight.
\item \textbf{Distilled-temp-$t$}:
We train the simple model using a cross entropy loss with soft targets. Soft targets are obtained from the softmax ouputs of the last layer of the complex model (or equivalently the last linear probe) rescaled by temperature $t$ as in distillation of \cite{distill}. By using cross validation, we pick two temperatures that are competitive on the validation set ($t=0.5$ and $t=40.5$) in terms of validation accuracy for the simple models. We cross-validated over temperatures from the set $\{0.5,3,10.5,20.5,30.5,40.5,50\}$.
See Figures 4 and 5 in the supplementary material for validation and test accuracies for SM-9 model with distillation at different temperatures.

\item \textbf{ProfWeight} ($>=\ell$): Implementation of our $\mathrm{ProfWeight}$ algorithm where the weight of every sample in training set $2$ is set to the averaged probe confidence scores of the true label of the probes corresponding to units above the $\ell$-th unit. We set $\ell = 13,14$ and $15$. The rationale is that unweighted test scores of all the simple models in Table 2 are all below the probe precision of layer $16$ on training set $2$ but always above the probe precision at layer $12$. The unweighted (i.e. Standard model) test accuracies from Table \ref{tab:acc} can be checked against the accuracies of different probes on training set $2$ given in Table 4 in the supplementary material.
\end{enumerate}

\begin{figure*}[htbp]
\parbox{0.75\linewidth}{
\centering
\includegraphics[width=9cm]{Distill-val.png}
\caption{Plot of validation set accuracy as a function of training steps for SM-9 simple model. The training is done using distillation. Validation accuracies for different temperatures used in distillation are plotted.}
\label{Fig:distill_val}
}
\end{figure*}
\begin{figure*}[htbp]
\parbox{.75\linewidth}{
\centering
\begin{tabular}{|c|c|}
\hline
 Distillation Temperatures & Test Accuracy of SM-9 \\
 \hline
 0.5 & 0.7719999990965191 \\
 3.0 & 0.709789470622414 \\
 5.0 & 0.7148421093037254 \\
 10.5 & 0.6798947390757109 \\
 20.5 & 0.7237894786031622 \\
 30.5 & 0.7505263184246264 \\
 40.5 & 0.7513684191201863 \\
 50 & 0.7268421022515548 \\
 \hline
\end{tabular}
\caption{Test Set accuracies of various versions of simple model SM-9 trained using distilled final layer confidence scores at various temperatures. The top two are for temperatures $0.5$ and $40.5$. }
\label{tab:distill}
}
\end{figure*}
}
\end{document}